\definecolor{DarkGreen}{rgb}{0.1,0.5,0.1}
\newcommand{\todo}[1]{\textbf{\textcolor{DarkGreen}{TODO: #1}}}
\newcommand{\jda}[1]{\textbf{\textcolor{blue}{JDA: #1}}}
\newcommand{\jk}[1]{\textbf{\textcolor{red}{JK: #1}}}
\def\g{\gamma_{\K}}
\def\K{\mathcal{K}}
\def\reals{\mathbb{R}}
\def\argmin{\mathop{\textnormal{argmin}}}
\def\argmax{\mathop{\textnormal{argmax}}}
\def\balpha{\boldsymbol{\alpha}}
\def\alg{\text{OAlg}}
\def\FTL{\textsc{FollowTheLeader}\xspace}
\def\FTRL{\textsc{FollowTheRegularizedLeader}\xspace}
\def\FTPL{\textsc{FollowThePerturbedLeader}\xspace}
\def\BTRL{\textsc{BeTheRegularizedLeader}\xspace}
\def\OFTRL{\textsc{OptimisticFTRL}\xspace}
\def\OFTL{\textsc{OptimisticFTL}\xspace}
\def\MD{\textsc{MirrorDescent}\xspace}
\def\HB{\textsc{HeavyBall}\xspace}
\def\FW{\textsc{FrankWolfe}\xspace}
\def\NA{\textsc{NesterovAcceleration}\xspace}
\newcommand{\lr}[2]{\left\langle#1,#2\right\rangle}
\newcommand{\regret}[1]{\balpha\textsc{-Reg}^{#1}}
\newcommand{\avgregret}[1]{\overline{\balpha\textsc{-Reg}}^{#1}}
\newcommand{\XX}{\mathcal{X}}
\newcommand{\YY}{\mathcal{Y}}
\newcommand{\pr}[1]{\left(#1\right)}
\def\yof{\widetilde{y}}
\def\yftl{\hat{y}}
\def\xof{\widetilde{x}}
\def\xav{\bar{x}}
\def\yav{\bar{y}}
\newtheorem{lemma}{Lemma}
\newtheorem{corollary}{Corollary}
\newtheorem{theorem}{Theorem}
\title{Acceleration through Optimistic No-Regret Dynamics}
\author{
  Jun-Kun Wang \\
  College of Computing\\
  Georgia Institute of Technology\\
  Atlanta, GA 30313 \\
  \texttt{jimwang@gatech.edu} \\
  \And
  Jacob Abernethy \\
  College of Computing\\
  Georgia Institute of Technology\\
  Atlanta, GA 30313 \\
  \texttt{prof@gatech.edu} \\
}
\begin{document}

\maketitle

\begin{abstract}
We consider the problem of minimizing a smooth convex function by reducing the optimization to computing the Nash equilibrium of a particular zero-sum convex-concave game. Zero-sum games can be solved using online learning dynamics, where a classical technique involves simulating two no-regret algorithms that play against each other and, after $T$ rounds, the average iterate is guaranteed to solve the original optimization problem with error decaying as $O(\log T/T)$.
In this paper we show that the technique can be enhanced to a rate of $O(1/T^2)$ by extending recent work \cite{RS13,SALS15} that leverages \textit{optimistic learning} to speed up equilibrium computation.
The resulting optimization algorithm derived from this analysis coincides \textit{exactly} with the well-known \NA \cite{N83a} method, and indeed the same story allows us to recover several variants of the Nesterov's algorithm via small tweaks. We are also able to establish the accelerated linear rate for a function which is both strongly-convex and smooth. This methodology unifies a number of different iterative optimization methods: we show that the \HB algorithm is precisely the non-optimistic variant of \NA, and recent prior work already established a similar perspective on \FW \cite{AW17,ALLW18}.
\end{abstract}

\section{Introduction}

One of the most successful and broadly useful tools recently developed within the machine learning literature is the \textit{no-regret framework}, and in particular \textit{online convex optimization} (OCO) \cite{Z03}. In the standard OCO setup, a learner is presented with a sequence of (convex) loss functions $\ell_1(\cdot), \ell_2(\cdot), \ldots$, and must make a sequence of decisions $x_1, x_2, \ldots$ from some set $\K$ in an online fashion, and observes $\ell_t$ after only having committed to $x_t$. Assuming the sequence $\{ \ell_t \}$ is chosen by an adversary, the learner aims is to minimize the \textit{average regret} $\bar R_T := \frac 1 T \left( \sum_{t=1}^T \ell_t(x_t) - \min_{x \in \K} \sum_{t=1}^T \ell_t(x) \right)$ against any such loss functions. Many simple algorithms have been developed for OCO problems---including \MD, \FTRL, \FTPL, etc.---and these algorithms exhibit regret guarantees that are strong even against adversarial opponents. Under very weak conditions one can achieve a regret rate of $\bar R_T = O(1/\sqrt T)$, or even $\bar R_T = O(\log T/T)$ with required curvature on $\ell_t$.

One can apply online learning tools to several problems, but perhaps the simplest is to find the approximate minimum of a convex function $\argmin_{x \in \K} f(x)$. With a simple reduction we set $\ell_t = f$, and it is easy to show that, via Jensen's inequality,  the average iterate $\bar x_T := \frac{x_1 + \ldots + x_T}{T}$ satisfies
\[
 \textstyle f(\bar x_T) \leq \frac 1 T \sum_{t=1}^T f(x_t) = \frac 1 T \sum_{t=1}^T \ell_t(x_t) \leq \min_{x \in \K} \frac 1 T \sum_{t=1}^T \ell_t(x) + \bar R_T = \min_{x \in \K} f(x) + \bar R_T
\]
hence $\bar R_T$ upper bounds the approximation error. But this reduction, while simple and natural, is quite limited. For example, we know that when $f(\cdot)$ is \textit{smooth}, more sophisticated algorithms such as \FW and \HB achieve convergence rates of $O(1/T)$, whereas the now-famous \NA algorithm achieves a rate of $O(1/T^2)$. The fast rate shown by Nesterov was quite surprising at the time, and many researchers to this day find the result quite puzzling. There has been a great deal of work aimed at providing a more natural explanation of acceleration, with a more intuitive convergence proof \cite{wibisono2016variational,AO17,FB15}. This is indeed one of the main topics of the present work, and we will soon return to this discussion.

Another application of the no-regret framework is the solution of so-called saddle-point problems, which are equivalently referred to as Nash equilibria for zero-sum games. Given a function $g(x,y)$ which is convex in $x$ and concave in $y$ (often called a \textit{payoff function}),
define $V^*=\inf_{x \in \K} \sup_{y} g(x,y)$. An $\epsilon$-\textit{equilibrium} of $g(\cdot, \cdot)$ is a pair $\hat x, \hat y$ such that such that
\begin{equation}
\textstyle V^* - \epsilon \leq \inf_{x \in \K} g(x, \hat y)  \leq V^* \leq   \sup_y g(\hat x, y) \leq V^* + \epsilon.
\end{equation}

One can find an approximate saddle point of the game with the following setup:  implement a no-regret learning algorithm for both the $x$ and $y$ players simultaneously, after observing the actions $\{ x_t, y_t \}_{t=1\ldots T}$ return the time-averaged iterates $(\hat x, \hat y) = \left(\frac{x_1 + \ldots + x_T}{T}, \frac{y_1 + \ldots + y_T}{T}\right)$. 
A simple proof shows that $(\hat x, \hat y)$ is an approximate equilibrium, with approximation bounded by the average regret of both players (see Theorem~\ref{thm:convergence}). In the case where the function $g(\cdot, \cdot)$ is biaffine, the no-regret reduction guarantees a rate of $O(1/\sqrt{T})$, and it was assumed by many researchers this was the fastest possible using this framework. But one of the most surprising online learning results to emerge in recent years established that no-regret dynamics can obtain an even faster rate of $O(1/T)$. Relying on tools developed by \cite{CJ12}, this fact was first proved by \cite{RK13} and extended by \cite{SALS15}. The new ingredient in this recipe is the use of \textit{optimistic} learning algorithms, where the learner seeks to benefit from the predictability of slowly-changing inputs $\{ \ell_t \}$.

We will consider solving the classical convex optimization problem $\min_x f(x)$, for smooth functions $f$, by instead solving an associated saddle-point problem which we call the \textit{Fenchel Game}.
Specifically, we consider that the payoff function $g$ of the game to be
\begin{equation} \label{eq:fenchelgame}
\textstyle{ g(x,y) = \langle x, y \rangle - f^*(y). }
\end{equation}
where $f^*(\cdot)$ is the \textit{fenchel conjugate} of $f(\cdot)$. This is an appropriate choice of payoff function since, 
$V^* = \min_x f(x)$ and 
$\sup_y g(\hat{x}, y ) = \sup_{y} \langle \hat{x}, y \rangle - f^*(y) = f(\hat{x})$.
Therefore, by the definition of an $\epsilon$-equilibrium, we have that
\begin{lemma} \label{lem:fenchelgame}
  If $(\hat x, \hat y)$ is an $\epsilon$-equilibrium of the Fenchel Game \eqref{eq:fenchelgame}, then $f(\hat x) - \min_{x} f(x) \leq \epsilon$.
\end{lemma}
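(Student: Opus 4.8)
The plan is to observe that the statement follows almost immediately once we unpack the definition of an $\epsilon$-equilibrium and substitute the two identities recorded just above the lemma. First I would recall the two structural facts about the Fenchel game \eqref{eq:fenchelgame}: that the game value satisfies $V^* = \min_x f(x)$, and that for any fixed $\hat x$ the best response of the $y$-player recovers the primal objective, namely $\sup_y g(\hat x, y) = \sup_y \langle \hat x, y \rangle - f^*(y) = f(\hat x)$. This second identity is exactly the statement that $f$ equals its own biconjugate, $f = f^{**}$, which holds because $f$ is assumed closed and convex (Fenchel--Moreau). This is the only nontrivial ingredient in the whole argument, and it is precisely what makes the payoff $\langle x, y \rangle - f^*(y)$ the right choice of game.

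With these two identities in hand, the remainder is a direct chase through the defining inequalities of an $\epsilon$-equilibrium. By assumption $(\hat x, \hat y)$ satisfies
\[
V^* - \epsilon \leq \inf_{x \in \K} g(x, \hat y) \leq V^* \leq \sup_y g(\hat x, y) \leq V^* + \epsilon ,
\]
and I would simply read off the rightmost inequality, $\sup_y g(\hat x, y) \leq V^* + \epsilon$. Substituting $\sup_y g(\hat x, y) = f(\hat x)$ on the left-hand side and $V^* = \min_x f(x)$ on the right-hand side yields $f(\hat x) \leq \min_x f(x) + \epsilon$, which is the claimed bound after trivial rearrangement. Note that only the final link of the sandwich is actually used; the inner inequalities involving $\hat y$ are not needed for this direction.

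Since both required identities are already available in the text preceding the lemma, I do not expect a genuine obstacle here. The only subtlety worth flagging explicitly is the reliance on $f^{**} = f$, so I would state the standing assumption that $f$ is closed (lower semicontinuous) and convex, under which the biconjugation identity is valid. If one wanted a fully self-contained treatment, the remaining effort would go into justifying that identity, but that is a standard consequence of convex duality rather than anything specific to the saddle-point reduction.
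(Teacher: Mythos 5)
Your proposal is correct and follows essentially the same route as the paper, which likewise derives the lemma directly from the two identities $V^* = \min_x f(x)$ and $\sup_y g(\hat x, y) = f(\hat x)$ combined with the rightmost inequality $\sup_y g(\hat x, y) \leq V^* + \epsilon$ in the sandwich definition of an $\epsilon$-equilibrium. Your explicit remark that the identity $\sup_y \langle \hat x, y\rangle - f^*(y) = f(\hat x)$ rests on biconjugation ($f = f^{**}$ for closed convex $f$) is a hypothesis the paper leaves implicit, but it does not alter the argument.
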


One can imagine computing the equilibrium of the Fenchel game using no-regret dynamics, and indeed this was the result of recent work \cite{AW17} establishing the \FW algorithm as precisely an instance of two competing learning algorithms. 

In the present work we will take this approach even further. 
\begin{enumerate}[topsep=0pt,itemsep=-1ex,partopsep=1ex,parsep=1ex]
  \item We show that, by considering a notion of \textit{weighted regret}, we can compute equilibria in the Fenchel game at a rate of $O(1/T^2)$ using no-regret dynamics where the only required condition is that $f$ is smooth. This improves upon recent work \cite{ALLW18} on a faster \FW method, which required strong convexity of $f$ (see Appendix~\ref{app:accFW}).
  \item We show that the secret sauce for obtaining the fast rate is precisely the use of an optimistic no-regret algorithm, \OFTL \cite{ALLW18}, combined with appropriate weighting scheme.
  \item We show that, when viewed simply as an optimization algorithm, this method is \textit{identically} the original \NA method. In addition, we recover several variants of \NA (see \cite{N83b,N88,N05}) using small tweaks of the framework.
  \item We show that if one simply plays \FTL without optimism, the resulting algorithm is precisely the \HB. The latter is known to achieve a suboptimal rate in general, and our analysis sheds light on this difference.
  \item Under the additional assumption that function $f(\cdot)$ is strongly convex, we show that an accelerated linear rate can also be obtained from the game framework.
  \item Finally, we show that the same equilibrium framework can also be extended to composite 
  optimization and lead to a variant of Accelerated Proximal Method. 
\end{enumerate}



\textbf{Related works:}
In recent years, there are growing interest in giving new interpretations of Nesterov’s accelerated algorithms. For example, \cite{T08} gives a unified analysis for some Nesterov’s accelerated algorithms
\cite{N88,N04,N05}, using the standard techniques and analysis in optimization literature.
\cite{LRP16} connects the design of accelerated algorithms with dynamical systems and control theory. \cite{BLS15} gives a geometric interpretation of the Nesterov’s method for unconstrained optimization, inspired by the ellipsoid method.
\cite{FB15} studies the Nesterov’s methods and the \HB method for quadratic non-strongly convex problems by analyzing the eigen-values of some linear dynamical systems.
\cite{AO17} proposes a variant of accelerated algorithms by mixing the updates of gradient descent and mirror descent and showing the updates are complementary.
\cite{SBC14,wibisono2016variational} connect the acceleration algorithms with differential equations.
In recent years there has emerged a lot of work where learning problems are treated as repeated games \cite{NIPS2013_5148,abernethy2008optimal}, and many researchers have been studying the relationship between game dynamics and provable convergence rates \cite{balduzzi2018mechanics,gidel2018negative,daskalakis2017training}.

We would like to acknowledge George Lan for his excellent notes titled ``Lectures on Optimization for Machine Learning'' (unpublished). In parallel to the development of the results in this paper, we discovered that Lan had observed a similar connection between \NA and repeated game playing (Chapter 3.4). A game interpretation was given by George Lan and Yi Zhou in Section 2.2 of \cite{LZ17}.

\section{Preliminaries} \label{pre}

\paragraph{Convex functions and conjugates.}
A function $f$ on $\reals^d$ is $L$-smooth w.r.t. a norm $\| \cdot \|$ if $f$ is everywhere differentiable and
it has lipschitz continuous gradient
$\| \nabla f(u) - \nabla f(v) \|_* \leq L \| u - v\|$,
where $\| \cdot \|_{*}$ denotes the dual norm.
Throughout the paper, our goal will be to solve the problem of minimizing an $L$-smooth function $f(\cdot)$ over a convex set $\K$. 
We also assume that the optimal solution of $x^* := \argmin_{x \in \K} f(x)$ has  finite norm.
For any convex function $f$, its Fenchel conjugate is $f^*(y) := \sup_{x \in \text{dom}(f) }  \langle x, y \rangle - f(x)$. 
If a function $f$ is convex, then its conjugate $f^*$ is also convex.
Furthermore,
when the function $f(\cdot)$ is strictly convex,
we have that $\nabla f(x) = \displaystyle \argmax_{y}  \langle x, y \rangle - f^*(y) $. 

Suppose we are given a differentiable function $\phi(\cdot)$, then 
the Bregman divergence $V_c(x)$ with respect to $\phi(\cdot)$ at 
a point $c$ is defined as $V_{c}(x):= \phi(x) - \langle \nabla \phi(c), x - c  \rangle  - \phi(c)$. Let $\|\cdot\|$ be any norm on $\reals^d$. When we have that $V_{c}(x) \geq \frac{\sigma}{2} \| c - x \|^{2}$ for any $x,c \in \text{dom}(\phi)$, we say that $\phi(\cdot)$ is a \emph{$\sigma$-strongly convex function} with respect to $\| \cdot \|$. Throughout the paper we assume that $\phi(\cdot)$ is 1-strongly convex.

\paragraph{No-regret zero-sum game dynamics.} Let us now consider the process of solving a zero-sum game via repeatedly play by a pair of online learning strategies. The sequential procedure is described in Algorithm~\ref{alg:game}. 

\begin{algorithm} 
   \caption{ Computing equilibrium using no-regret algorithms } \label{alg:game}
\begin{algorithmic}[1]
\STATE Input: sequence $\alpha_1, \ldots, \alpha_T > 0$
\FOR{$t= 1, 2, \dots, T$}
\STATE $y$-player selects $y_t\in \YY = \reals^d$ by $\alg^y$.
\STATE $x$-player selects $x_t \in \XX$ by $\alg^x$, possibly with knowledge of $y_t$.
\STATE $y$-player suffers loss $\ell_{t}(y_t)$ with weight $\alpha_t$, where $\ell_t(\cdot) = -g(x_t,\cdot)$.
\STATE $x$-player suffers loss $h_{t}(x_t)$ with weight $\alpha_t$, where $h_t(\cdot) = g(\cdot,y_t)$.
\ENDFOR
\STATE Output $(\xav_T,\yav_T) := \left(\frac{ \sum_{s=1}^T \alpha_s x_s  }{ A_T }, \frac{ \sum_{s=1}^T \alpha_s y_s  }{ A_T }\right)$.
\end{algorithmic}
\end{algorithm}

In this paper, we consider Fenchel game with weighted losses depicted in Algorithm~\ref{alg:game}, following the same setup as \cite{ALLW18}. In this game, the $y$-player plays before the $x$-player plays
and the $x$-player sees what the $y$-player plays before choosing its action.
The $y$-player receives loss functions $\alpha_{t} \ell_{t}(\cdot)$ in round $t$, in which $\ell_{t}(y):= f^{*}(y)- \langle x_t, y \rangle $,
while the x-player see its loss functions $\alpha_{t} h_{t}(\cdot)$ in round $t$, in which $h_{t}(x):= \langle x, y_t \rangle - f^{*}(y_t)$.
Consequently, we can define the \textit{weighted regret} of the $x$ and $y$ players as
\begin{eqnarray}
  \label{eq:yregret}  \textstyle   \regret{y} &  \textstyle := & \textstyle
     \sum_{t=1}^T  \alpha_t  \ell_t(y_t) - \min_{y} \sum_{t=1}^T  \alpha_t  \ell_t(y)\\
  \label{eq:xregret}    \textstyle  \regret{x} &  \textstyle := & \textstyle
     \sum_{t=1}^T  \alpha_t  h_t(x_t) - \sum_{t=1}^T  \alpha_t  h_t(x^*)
\end{eqnarray}
Notice that the $x$-player's regret is computed relative to $x^*$ the minimizer of $f(\cdot)$, rather than the minimizer of $\sum_{t=1}^T  \alpha_t  h_t(\cdot)$. Although slightly non-standard, this allows us to handle the unconstrained setting while Theorem~\ref{thm:convergence} still holds as desired.

At times when we want to refer to the regret on another sequence $y_1', \ldots, y_T'$ we may refer to this as $\regret{}(y_1', \ldots, y_T')$.
We also denote $A_t$ as the cumulative sum of the weights $A_t:=\sum_{s=1}^t \alpha_s$ and the weighted average regret $\avgregret{} := \frac{\regret{}}{A_T}$.
Finally, for offline constrained optimization (i.e. $\min_{x \in \K} f(x)$), we let the decision space of the benchmark/comparator in the weighted regret definition to be $\XX=\K$; for offline unconstrained optimization, we let the decision space of the benchmark/comparator to be a norm ball that contains the optimum solution of the offline problem (i.e. contains $\arg\min_{x \in \reals^n} f(x)$), which means that $\XX$ of the comparator is a norm ball. We let $\YY = \reals^d$ be unconstrained.
\begin{theorem}\label{thm:convergence} \cite{ALLW18}
  Assume a $T$-length sequence $\balpha$ are given. Suppose in Algorithm~\ref{alg:game} the online learning algorithms $\alg^x$ and $\alg^y$ have the $\balpha$-weighted average regret $\avgregret{x}$ and $\avgregret{y}$ respectively. Then the output  $(\bar{x}_{T},\bar{y}_{T})$ is an $\epsilon$-equilibrium for $g(\cdot, \cdot)$, with
$    \epsilon = \avgregret{x} + \avgregret{y}.$
\end{theorem}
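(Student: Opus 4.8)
The plan is to verify the two outer inequalities in the definition of an $\epsilon$-equilibrium, namely $\sup_y g(\bar{x}_T, y) \le V^* + \epsilon$ and $V^* - \epsilon \le \inf_{x \in \XX} g(x, \bar{y}_T)$, since the two middle inequalities are automatic: for any fixed $\bar{y}_T$ we have $\inf_x g(x, \bar{y}_T) \le \inf_x \sup_y g(x,y) = V^*$, and symmetrically $\sup_y g(\bar{x}_T, y) \ge \inf_x \sup_y g(x,y) = V^*$, both directly from the definition $V^* = \inf_x \sup_y g(x,y)$ without invoking a minimax theorem. So everything reduces to controlling the two outer gaps by the weighted regrets.

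The workhorse is to add the two weighted regrets \eqref{eq:xregret} and \eqref{eq:yregret} and exploit that the \emph{realized} losses of the two players are exact negatives of one another: since $h_t(x_t) = g(x_t,y_t) = -\ell_t(y_t)$, the terms $\sum_t \alpha_t h_t(x_t)$ and $\sum_t \alpha_t \ell_t(y_t)$ cancel, leaving
\[
\regret{x} + \regret{y} \;=\; \max_{y} \sum_{t=1}^{T} \alpha_t\, g(x_t, y) \;-\; \sum_{t=1}^{T} \alpha_t\, g(x^*, y_t).
\]
Next I would invoke the convex--concave structure of $g$ through Jensen's inequality applied to the weighted averages. Convexity of $g(\cdot, y)$ in its first argument gives $A_T\, g(\bar{x}_T, y) \le \sum_t \alpha_t g(x_t,y)$ for every $y$, hence $\max_y \sum_t \alpha_t g(x_t,y) \ge A_T \sup_y g(\bar{x}_T, y)$; concavity of $g(x^*, \cdot)$ gives $\sum_t \alpha_t g(x^*, y_t) \le A_T\, g(x^*, \bar{y}_T)$. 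The key observation that tames the nonstandard comparator is that $g(x^*, \bar{y}_T) \le \sup_y g(x^*, y) = f(x^*) = V^*$, so the subtracted term is bounded by $A_T V^*$ \emph{regardless} of the realized $\bar{y}_T$. Substituting both estimates and dividing by $A_T$ yields $\sup_y g(\bar{x}_T, y) - V^* \le \avgregret{x} + \avgregret{y} = \epsilon$, which is the upper outer inequality and, via Lemma~\ref{lem:fenchelgame}, already delivers the optimization guarantee $f(\bar{x}_T) \le \min_x f(x) + \epsilon$.

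The lower outer inequality is obtained by the mirror-image argument, lower-bounding $\inf_{x \in \XX} g(x,\bar{y}_T) \ge \tfrac{1}{A_T}\inf_{x\in\XX}\sum_t \alpha_t g(x,y_t)$ by concavity and then using the $y$-player's regret together with $\sup_y g(\bar{x}_T,y)\ge V^*$ to show the cumulative realized payoff is at least $A_T V^* - \regret{y}$. The step I expect to require the most care is precisely the treatment of the comparator $x^*$ together with the constrained-versus-unconstrained distinction flagged before the statement: because $\regret{x}$ is measured against the fixed point $x^*$ rather than against the minimizer of the realized cumulative loss, the usual symmetric derivation does not go through verbatim, and in the unconstrained case $\inf_{x\in\reals^d} g(x,\bar{y}_T)$ can even be $-\infty$. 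The resolution is to take the comparator class $\XX$ to be the feasible set $\K$ (constrained case) or a norm ball containing $x^*$ (unconstrained case), check $x^* \in \XX$ so that $\regret{x}$ is a legitimate benchmark, and lean on the identity $\sup_y g(x^*,y) = f(x^*) = V^*$, which replaces the data-dependent benchmark by the game value $V^*$ and is what makes the nonstandard comparator sufficient.
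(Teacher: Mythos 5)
Your proposal is correct and takes essentially the same route as the paper's proof: both rest on Jensen's inequality applied to the $\balpha$-weighted averages together with the identity $\sup_y g(x^*,y) = f(x^*) = V^*$ to neutralize the nonstandard comparator, and your ``add the two regrets and cancel the realized payoffs'' decomposition is just an algebraic rearrangement of the paper's sandwiching of $\frac{1}{A_T}\sum_{t=1}^T \alpha_t g(x_t,y_t)$ between the $y$-player's and $x$-player's regret chains. If anything, you are more explicit than the paper about the lower outer inequality: the paper's final display asserts $\sup_y g(\xav_T,y) - \avgregret{y} \le \inf_{x\in\XX} g(x,\yav_T) + \avgregret{x}$ even though its derivation only produces $g(x^*,\yav_T)$ on the right (the $\inf$ requires, as you flag, either a comparator class restricted to the norm ball with a regret bound uniform over it, or the standard regret definition), while the upper outer inequality --- the one actually needed for Lemma~\ref{lem:fenchelgame} and the optimization guarantee --- is established identically in both arguments.
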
 

\section{An Accelerated Solution to the Fenchel Game via Optimism} \label{analysis:meta}

We are going to analyze more closely the use of Algorithm~\ref{alg:game}, with the help of Theorem~\ref{thm:convergence}, to establish a fast method to compute an approximate equilibrium of the Fenchel Game. In particular, we will establish an approximation factor of $O(1/T^2)$ after $T$ iterations, and we recall that this leads to a $O(1/T^2)$ algorithm for our primary goal of solving $\min_{x \in \K} f(x)$.

\subsection{Analysis of the weighted regret of the y-player (i.e. the gradient player)}

A very natural online learning algorithm is \FTL, which always plays the point with the lowest (weighted) historical loss
\begin{eqnarray*}
  \textstyle  \FTL \quad \quad \yftl_t & \textstyle := & \textstyle \argmin_y \left\{ \sum_{s=1}^{t-1} \alpha_s \ell_s(y) \right\}.
\end{eqnarray*}
\FTL is known to not perform well against arbitrary loss functions, but for strongly convex $\ell_t(\cdot)$ one can prove an $O(\log T/T)$ regret bound in the unweighted case. For the time being, we shall focus on a slightly different algorithm that utilizes ``optimism'' in selecting the next action:
\begin{eqnarray*}
\textstyle	\OFTL \quad \quad \yof_t & \textstyle := & \textstyle \argmin_y \left\{ \alpha_t \ell_{t-1}(y) +  \sum_{s=1}^{t-1} \alpha_s \ell_s(y)\right\}.
\end{eqnarray*}
This procedure can be viewed as an optimistic variant of \FTL since the algorithm is effectively making a bet that, while $\ell_t(\cdot)$ has not yet been observed, it is likely to be quite similar to $\ell_{t-1}$. Within the online learning community, the origins of this trick go back to \cite{CJ12}, although their algorithm was described in terms of a 2-step descent method. This was later expanded by \cite{RK13} who coined the term \emph{optimistic mirror descent} (OMD), and who showed that the proposed procedure can accelerate zero-sum game dynamics when both players utilize OMD. \OFTL, defined as a ``batch'' procedure, was first presented in \cite{ALLW18} and many of the tools of the present paper follow directly from that work.

For convenience, we'll define $\delta_t(y) := \alpha_t (\ell_t(y) - \ell_{t-1}(y))$. Intuitively, the regret will be small if the functions $\delta_t$ are not too big. This is formalized in the following lemma.
\begin{lemma} \label{lem:simple_bound}
	For an arbitrary sequence $\{ \alpha_t, \ell_t \}_{t=1\ldots T}$, the regret of \OFTL satisfies
$\textstyle		\regret{y}(\yof_1, \ldots, \yof_T)  \leq  \sum_{t=1}^T \delta_t(\yof_t) - \delta_t(\yftl_{t+1}) $.
\end{lemma}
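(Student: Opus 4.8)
The plan is to reduce the claimed inequality to a single statement about cumulative losses, provable by a \BTL-style induction. Write $L_t(y) := \sum_{s=1}^t \alpha_s \ell_s(y)$ for the cumulative weighted loss, and note that the two iterates are leaders of closely related objectives: $\yftl_{t+1} = \argmin_y L_t(y)$ while $\yof_t = \argmin_y\{ L_{t-1}(y) + \alpha_t \ell_{t-1}(y)\}$. Since $\min_y L_T(y) = L_T(\yftl_{T+1})$, the comparator inside $\regret{y}$ is exactly $\yftl_{T+1}$, so $\regret{y}(\yof_1,\ldots,\yof_T) = \sum_{t=1}^T \alpha_t\ell_t(\yof_t) - L_T(\yftl_{T+1})$. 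Substituting $\delta_t(y) = \alpha_t\ell_t(y) - \alpha_t\ell_{t-1}(y)$ into the target bound and cancelling the common $\sum_t \alpha_t\ell_t(\yof_t)$ terms, the claim is seen to be algebraically equivalent to
\[
  \textstyle \sum_{t=1}^T \alpha_t\ell_{t-1}(\yof_t) \;+\; \sum_{t=1}^T \delta_t(\yftl_{t+1}) \;\le\; L_T(\yftl_{T+1}),
\]
which I will call $C(T)$. (Here $\ell_0$ denotes the round-$1$ hint used to define $\yof_1$.) The whole lemma thus reduces to establishing $C(T)$.

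Next I would prove $C(T)$ by induction on $T$. The base case $C(1)$ reduces to $\ell_0(\yof_1) \le \ell_0(\yftl_2)$, which holds because $\yof_1 = \argmin_y \alpha_1\ell_0(y)$. For the inductive step, I add the round-$T$ terms $\alpha_T\ell_{T-1}(\yof_T) + \delta_T(\yftl_{T+1})$ to $C(T-1)$ and, after expanding $L_T = L_{T-1} + \alpha_T\ell_T$ and $\delta_T(\yftl_{T+1})$ and cancelling $\alpha_T\ell_T(\yftl_{T+1})$, the goal collapses to the single-round inequality
\[
  L_{T-1}(\yftl_T) + \alpha_T\ell_{T-1}(\yof_T) \;\le\; \big( L_{T-1} + \alpha_T\ell_{T-1}\big)(\yftl_{T+1}).
\]
This follows by chaining exactly two optimality facts, each comparing objectives that differ in only one summand: since $\yftl_T = \argmin_y L_{T-1}(y)$ we have $L_{T-1}(\yftl_T)\le L_{T-1}(\yof_T)$, and since $\yof_T = \argmin_y\{L_{T-1}(y)+\alpha_T\ell_{T-1}(y)\}$ we have $(L_{T-1}+\alpha_T\ell_{T-1})(\yof_T) \le (L_{T-1}+\alpha_T\ell_{T-1})(\yftl_{T+1})$; adding $\alpha_T\ell_{T-1}(\yof_T)$ to the first fact and combining with the second closes the step.

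The main obstacle is resisting the tempting but lossy shortcut. Applying \BTL directly to the true losses $\{\alpha_t\ell_t\}$ gives $\sum_t \alpha_t\ell_t(\yftl_{t+1}) \le L_T(\yftl_{T+1})$ and hence $\regret{y}(\yof_1,\ldots,\yof_T) \le \sum_t \alpha_t\big(\ell_t(\yof_t)-\ell_t(\yftl_{t+1})\big)$; but converting this into the $\delta_t$ form would further require $\sum_t \alpha_t\ell_{t-1}(\yof_t) \le \sum_t \alpha_t\ell_{t-1}(\yftl_{t+1})$, which need not hold, since $\yof_t$ and $\yftl_{t+1}$ minimize genuinely different objectives (the hint $\ell_{t-1}$ versus the realized $\ell_t$) and their losses cannot be compared term-by-term. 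The induction is precisely what circumvents this: it only ever compares objectives differing by a single term, so the two argmin optimalities apply exactly and the hint terms are threaded through the telescoping rather than discarded. Notably the argument uses no convexity or smoothness of the $\ell_t$ — consistent with the ``arbitrary sequence'' hypothesis — requiring only $\alpha_t > 0$ and existence of the relevant minimizers.
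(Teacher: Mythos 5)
Your proof is correct and is essentially the paper's own argument in regrouped form: both rest on exactly the same two optimality facts --- that $\yftl_T$ minimizes $L_{T-1}$ and that $\yof_T$ minimizes $L_{T-1}+\alpha_T\ell_{T-1}$ --- threaded through an induction on $T$. The paper unwinds the regret backward one round at a time via the identity $L_T = \tilde L_T + \delta_T$ with $\tilde L_T := L_{T-1}+\alpha_T\ell_{T-1}$, whereas you induct forward on the algebraically equivalent cumulative inequality $C(T)$; the per-round step is identical, and your explicit base case (using the round-$1$ hint $\ell_0$) just makes concrete what the paper leaves to its trivial empty-sum base.
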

\begin{proof}
	Let $L_t(y) := \sum_{s=1}^{t} \alpha_s \ell_s(y)$ and also $\tilde L_t(y) := \alpha_t \ell_{t-1}(y) +  \sum_{s=1}^{t-1} \alpha_s \ell_s(y)$.

	\begin{eqnarray*} 
\textstyle		\regret{}(\yof_{1:T}) & := & \textstyle \sum_{t=1}^T  \alpha_t  \ell_t(\yof_t) - L_T(\yftl_{T+1}) =  \textstyle \sum_{t=1}^{T}  \alpha_t  \ell_t(\yof_t) - \tilde L_{T}(\yftl_{T+1}) - \delta_T(\yftl_{T+1}) \\
		& \leq & \textstyle\sum_{t=1}^{T}  \alpha_t  \ell_t(\yof_t) - \tilde L_{T}(\yof_T) - \delta_T(\yftl_{T+1}) \\
		& \textstyle = & \textstyle  \sum_{t=1}^{T-1}  \alpha_t  \ell_t(\yof_t) - L_{T-1}(\yof_T) + \delta_T(\yof_T) - \delta_T(\yftl_{T+1})  \\
		& \textstyle \leq & \textstyle\sum_{t=1}^{T-1}  \alpha_t  \ell_t(\yof_t) - L_{T-1}(\yftl_T) + \delta_T(\yof_T) - \delta_T(\yftl_{T+1})\\
		& \textstyle = &\textstyle \regret{}(\yof_{1:T-1}) + \delta_T(\yof_T) - \delta_T(\yftl_{T+1}).
	\end{eqnarray*}
	The bound follows by induction on $T$. 
\end{proof}


The result from Lemma~\ref{lem:simple_bound} is generic, and would hold for any online learning problem. But for the Fenchel game, we have a very specific sequence of loss functions, $\ell_t(y) := -g(x_t, y) = f^*(y) - \langle x_t, y \rangle$. With this in mind, let us further analyze the regret of the $y$ player.

For the time being, let us assume that the sequence of $x_t$'s is arbitrary. We define
\begin{eqnarray*}
	\xav_t := \textstyle \frac{1}{A_t}\sum_{s=1}^t \alpha_s x_s \quad \quad \text{ and } \quad \quad \xof_t := \textstyle \frac{1}{A_t}(\alpha_t x_{t-1} + \sum_{s=1}^{t-1} \alpha_s x_s ).
\end{eqnarray*}
It is critical that we have two parallel sequences of iterate averages for the $x$-player. Our final algorithm will output $\xav_T$, whereas the Fenchel game dynamics will involve computing $\nabla f$ at the \emph{reweighted averages} $\xof_t$ for each $t=1, \ldots, T$.

To prove the key regret bound for the $y$-player, we first need to state some simple technical facts.
\begin{eqnarray}
	\label{eq:haty}
	\yftl_{t+1} & = & \argmin_y \sum_{s=1}^t \alpha_s \left( f^*(y) - \langle x_s, y \rangle\right) = \argmax_y  \left \langle \xav_t, y \right \rangle - f^*(y) = \nabla f(\xav_{t}) \\
	\label{eq:tildey} \yof_t & = & \nabla f(\xof_t)  \quad \quad \quad \quad \quad \quad \quad \quad \text{(following same reasoning as above)}, \\
	\label{eq:xdiffs} \xof_t - \xav_t & = & \frac{\alpha_t}{A_t}(x_{t-1} - x_{t}).
\end{eqnarray}
Equations~\ref{eq:haty} and~\ref{eq:tildey} follow from elementary properties of Fenchel conjugation and the Legendre transform \cite{R96}. Equation~\ref{eq:xdiffs} follows from a simple algebraic calculation.

\begin{lemma} \label{lem:yregretbound}
Suppose $f(\cdot)$ is a convex function that is $L$-smooth with respect to the the norm $\| \cdot \|$ with dual norm $\| \cdot \|_*$. Let $x_1, \ldots, x_T$ be an arbitrary sequence of points. Then, we have
\begin{equation} \label{wregret_y}
\textstyle
\regret{y}(\yof_1, \ldots, \yof_T) \leq L \sum_{t=1}^T \frac{\alpha_t^2}{A_t} \|x_{t-1} - x_t \|^2.
\end{equation}
\end{lemma}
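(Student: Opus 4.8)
The plan is to feed the generic regret bound of Lemma~\ref{lem:simple_bound} the specific structure of the Fenchel-game losses and then invoke $L$-smoothness through the gradient identities \eqref{eq:haty}--\eqref{eq:xdiffs}. First I would compute $\delta_t$ explicitly. Since $\ell_t(y) = f^*(y) - \langle x_t, y \rangle$, the conjugate term $f^*(y)$ cancels in the difference, leaving $\ell_t(y) - \ell_{t-1}(y) = \langle x_{t-1} - x_t, y \rangle$, so that $\delta_t(y) = \alpha_t \langle x_{t-1} - x_t, y \rangle$. This is the crucial simplification: $\delta_t$ is \emph{linear} in $y$, hence its values at two points differ only through the inner product with $x_{t-1} - x_t$. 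Consequently each summand in Lemma~\ref{lem:simple_bound} collapses to $\delta_t(\yof_t) - \delta_t(\yftl_{t+1}) = \alpha_t \langle x_{t-1} - x_t, \yof_t - \yftl_{t+1} \rangle$.

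Next I would substitute the closed forms $\yof_t = \nabla f(\xof_t)$ and $\yftl_{t+1} = \nabla f(\xav_t)$ from \eqref{eq:haty}--\eqref{eq:tildey}, so that a gradient difference appears inside the pairing. Applying H\"older's inequality (pairing $\| \cdot \|$ with its dual $\| \cdot \|_*$) bounds this by $\alpha_t \| x_{t-1} - x_t \| \cdot \| \nabla f(\xof_t) - \nabla f(\xav_t) \|_*$. The $L$-smoothness hypothesis controls the gradient gap by $L \| \xof_t - \xav_t \|$, and \eqref{eq:xdiffs} evaluates $\xof_t - \xav_t = \frac{\alpha_t}{A_t}(x_{t-1} - x_t)$. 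Multiplying the three factors yields the per-term bound $L \frac{\alpha_t^2}{A_t} \| x_{t-1} - x_t \|^2$, and summing over $t=1,\ldots,T$ produces exactly \eqref{wregret_y}.

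The argument is a short chain of substitutions rather than a deep estimate, so there is no serious obstacle; the only care required is bookkeeping which average ($\xof_t$ versus $\xav_t$) each gradient is evaluated at, and keeping the norm/dual-norm pairing consistent in the H\"older step so that smoothness applies cleanly. The entire conceptual content sits in the cancellation of $f^*$: it is precisely this that turns the optimistic regret into a sum of gradient-difference terms that smoothness can absorb, and that ultimately is responsible for the weighted-regret rate feeding into the $O(1/T^2)$ guarantee.
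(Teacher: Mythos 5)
Your proposal is correct and matches the paper's own proof essentially line for line: both invoke Lemma~\ref{lem:simple_bound}, exploit the cancellation of $f^*$ so that $\delta_t(y) = \alpha_t \langle x_{t-1} - x_t, y\rangle$ is linear, substitute $\yof_t = \nabla f(\xof_t)$ and $\yftl_{t+1} = \nabla f(\xav_t)$ via \eqref{eq:haty}--\eqref{eq:tildey}, and then chain H\"older's inequality, $L$-smoothness, and \eqref{eq:xdiffs} to obtain the per-term bound $L\frac{\alpha_t^2}{A_t}\|x_{t-1}-x_t\|^2$. There is nothing to correct.
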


\begin{proof}
 Following Lemma~\ref{lem:simple_bound}, and noting that here we have $\delta_t(y) = \alpha_t \langle x_{t-1} - x_{t}, y \rangle$, we have

    \begin{eqnarray*}
         \textstyle \sum_{t=1}^{T} \alpha_t \ell_t(\yof_t) - \alpha_t \ell_t(y^*) 
        & \leq &  \textstyle \sum_{t=1}^T \delta_t(\yof_t) - \delta_t(\yftl_{t+1}) = \textstyle  \sum_{t=1}^T \alpha_t \langle x_{t-1} - x_{t}, \yof_t - \yftl_{t+1} \rangle \\
        \text{(Eqns. \ref{eq:haty}, \ref{eq:tildey})} \quad \quad
        & = & \textstyle  \sum_{t=1}^T \alpha_t \langle x_{t-1} - x_{t}, \nabla f(\xof_t) - \nabla f(\xav_t) \rangle \\
        \text{(H\"older's Ineq.)} \quad \quad
        & \leq & \textstyle  \sum_{t=1}^T \alpha_t \| x_{t-1} - x_{t}\| \| \nabla f(\xof_t) - \nabla f(\xav_t) \|_* \\
        \text{($L$-smoothness of $f$)} \quad \quad
        & \leq & \textstyle   L \sum_{t=1}^T \alpha_t \| x_{t-1} - x_{t}\| \|\xof_t - \xav_t \| \\
        \text{(Eqn. \ref{eq:xdiffs})} \quad \quad
        & = & \textstyle  L \sum_{t=1}^T \frac{\alpha_t^2}{A_t} \| x_{t-1} - x_{t}\| \|x_{t-1} - x_{t} \|
    \end{eqnarray*}
as desired.
\end{proof}

We notice that a similar bound is given in \cite{ALLW18} for the gradient player using 
\OFTL, yet the above result is a stict improvement as the previous work relied on the additional assumption that $f(\cdot)$ is strongly convex. The above lemma depends only on the fact that $f$ has lipschitz gradients.

\subsection{Analysis of the weighted regret of the x-player}

In the present section we are going to consider that the $x$-player uses 
\MD for updating its action, which is defined as follows. 


Therefore,
\begin{equation} \label{reg_y_accH}
\textstyle
\regret{y} \leq 2 L \sum_{t=1}^T \frac{\alpha_t^2}{A_t} \|x_{t-1} - x_t \|^2.
\end{equation}

For the x-player, its loss function in round $t$ is $\alpha_{t} \ell_{t}(x):= \alpha_t ( \mu \phi(x) + \langle x, y_t \rangle) $, where $\phi(x):= \frac{1}{2}\|x\|^{2}_{2}$.
Assume the x-player plays \textsc{BeTheRegularizedLeader},
\begin{equation}
x_t \leftarrow \arg\min_{{x \in \XX}} \sum_{s=0}^t \alpha_{s} \ell_{s}(x),
\end{equation}
where $\alpha_0 \ell_{0}(x):= \alpha_{0} \mu \phi(x)$.
Denote 
\begin{equation}
\tilde{A}_{t}:= \sum_{s=0}^{t} \alpha_{s}.
\end{equation}
Notice that this is different from $A_{t}:= \sum_{s=1}^{t} \alpha_{s}$. 
Then, its regret is (proof is on the next page)
\begin{equation} \label{reg_x_acc0}
\begin{aligned}
\textstyle
\regret{x} := \sum_{t=1}^T \alpha_t \ell_t(x_t) - \alpha_t \ell_t( x^*)
\leq \alpha_0 \mu L_0 \| x^* - x_0 \| - \sum_{t=1}^T \frac{\mu \tilde{A}_{t-1}}{2} \|x_{t-1} - x_t \|^2,
\end{aligned}
\end{equation} 
where $L_{0}$ is the Lipchitz constant of the 1-strongly convex function $\phi(x)$
and $x_{0} = \arg\min_{x} \phi(x)$.

Summing (\ref{reg_y_accH}) and (\ref{reg_x_acc0}), we have
\begin{equation}
\begin{aligned}
& \regret{y} + \regret{x} \leq \alpha_0 \mu L_0  \|x^* - x_0\| + \sum_{t=1}^T ( \frac{2 L \alpha_t^2}{A_t} - \frac{\mu \tilde{A}_{t-1}}{2} ) \|x_{t-1} - x_t \|^2.
\end{aligned}
\end{equation}
We want to let the distance terms cancel out.
\begin{equation}
\frac{2 L \alpha_t^2}{ \tilde{A_t} - a_0} - \frac{\mu \tilde{A}_{t-1}}{2}  \leq 0,
\end{equation} 
which is equivalent to 
\begin{equation}
\begin{aligned}
   & 4 L \alpha_t^2 \leq \mu \tilde{A_t} \tilde{A}_{t-1} - \mu \alpha_0 \tilde{A}_{t-1}.
\\ & 4 L \frac{\alpha_t^2}{\tilde{A_t}^2 } \leq \mu \frac{\tilde{A}_{t-1} }{\tilde{A_t} } 
-  \mu \alpha_0 \frac{\tilde{A}_{t-1}}{\tilde{A_t} } \frac{1}{\tilde{A_t} }
\\ & 4 L \frac{\alpha_t^2}{\tilde{A_t}^2 } \leq \mu ( 1 - \frac{\alpha_0}{\tilde{A_t} } ) 
(1 -\frac{\alpha_t}{\tilde{A_t} })
\end{aligned}
\end{equation}
Let us denote the constant $\theta := \frac{\alpha_t}{\tilde{A_t} } > 0$.
\begin{equation}
\theta^2 + \frac{\mu}{4L} ( 1 - \frac{\alpha_0}{\tilde{A_t} } ) \theta
- \frac{\mu}{4L}( 1 - \frac{\alpha_0}{\tilde{A_t} }  ) \leq 0.
\end{equation}
Notice that $0 < \frac{\alpha_0}{\tilde{A_t} } \leq  1$.  It suffices to show that
\begin{equation}
\theta^2 + \frac{\mu}{4L} ( 1 - \frac{\alpha_0}{\tilde{A_t} } ) \theta
- \frac{\mu}{4L} \leq 0.
\end{equation}
Yet, we would expect that $\frac{\alpha_0}{\tilde{A_t} }$ is a decreasing function of $t$, so
it suffices to show that 
\begin{equation}
\theta^2 + \frac{\mu}{4L} ( 1 - \frac{\alpha_0}{\tilde{A_1} } ) \theta
- \frac{\mu}{4L} \leq 0,
\end{equation}
which is equivalent to
\begin{equation}
\begin{aligned}
& \theta^2 + \frac{\mu}{4L}  \frac{\alpha_1}{ \tilde{A_1}  } \theta - \frac{\mu}{4L} \leq 0
\\ & \theta^2 ( 1 + \frac{\mu}{4L}) - \frac{\mu}{4L} \leq 0.
\end{aligned}
\end{equation}
It turns out that $\theta= \sqrt{ \frac{\mu}{6L}} = \frac{1}{\sqrt{6 \kappa}}$ satisfies the above inequality, combining the fact that 
$\frac{\mu}{L} \leq 1$.

Therefore, the optimization error $\epsilon$ after $T$ iterations is
\begin{equation}
\begin{aligned}
& \epsilon \leq
\frac{ \regret{y} + \regret{x} }{A_T} 
\leq \frac{1}{A_1} \frac{A_1}{A_2} \cdots \frac{A_{T-1}}{A_T}
( \alpha_0 \mu L_0 \| x^* - x_0 \|  ) 
\\ & =  \frac{1}{A_1} (1 - \frac{\alpha_2}{A_2} ) \cdots (1 - \frac{\alpha_T}{A_T} )
( \alpha_0 \mu L_0 \| x^* - x_0 \|  ) 
\\ & \leq \frac{1}{A_1} (1 - \frac{\alpha_2}{\tilde{A}_2} ) \cdots (1 - \frac{\alpha_T}{\tilde{A}_T} )
( \alpha_0 \mu L_0 \| x^* - x_0 \|  ) 
\\ & \leq (1- \frac{1}{\sqrt{6 \kappa}} )^{T-1} \frac{\alpha_0 \mu L_0}{A_1} \| x^* - x_0 \|.
\end{aligned}
\end{equation}
which is $O( (1 - \frac{1}{\sqrt{6\kappa}})^{T}) = O( \exp( - \frac{1}{\sqrt{6\kappa}} T))$. 

\end{proof}

\begin{proof} (of (\ref{reg_x_acc0}))
First, we are going to use induction to show that
\begin{equation} \label{key:stc0}
\sum_{t=0}^{\tau} \alpha_t \ell_t(x_t) - \alpha_t \ell_t( x^*) \leq D_{\tau},
\end{equation}
for any $x^{*} \in \XX$,
where $D_{\tau}:= - \sum_{t=1}^{\tau} \frac{\mu \tilde{A}_{t-1}}{2} \|x_{t-1} - x_t \|^2. $

For the base case $t=0$, we have
\begin{equation}
\alpha_0 \mu \phi(x_0) - \alpha_0 \mu \phi(x^*)  \leq 0 = D_0,
\end{equation}
where $x_{0}$ is defined as $x_{0} = \arg\min_{{x \in \XX}} \alpha_0 \mu \phi(x)$.

Now suppose it holds at $t= \tau-1$.
\begin{equation}
\begin{aligned}
\sum_{t=0}^{\tau} \alpha_t \ell_t(x_t) 
& \overset{(a)}{\leq} D_{\tau-1} + \alpha_{\tau} \ell_{\tau} (x_{\tau})
+ \sum_{t=0}^{\tau-1} \alpha_t \ell_t(x_{\tau-1})
\\ &
\overset{(b)}{\leq} D_{\tau-1} + \alpha_{\tau} \ell_{\tau} (x_{\tau})
+ \sum_{t=0}^{\tau-1} \alpha_t \ell_t(x_{\tau}) - \frac{ \tilde{A}_{\tau-1} \mu }{2} \| x_{\tau-1} - x_{\tau}  \|^2
\\ & = D_{\tau-1} + \sum_{t=0}^{\tau} \alpha_t \ell_t(x_{\tau}) - \frac{ \tilde{A}_{\tau-1} \mu }{2} \| x_{\tau-1} - x_{\tau}  \|^2
\\ & = D_{\tau} +  \sum_{t=0}^{\tau} \alpha_t \ell_t(x_{\tau}) 
\\ & \leq D_{\tau} +  \sum_{t=0}^{\tau} \alpha_t \ell_t(x^*), 
\end{aligned}
\end{equation}
for any $x^{*} \in \XX$,
where $(a)$ we use the induction and we let the point $x^{*}= x_{{\tau-1}}$
and $(b)$ is by the strongly convexity and that 
$x_{\tau-1} = \arg\min_{x}  \sum_{t=0}^{\tau-1} \alpha_t \ell_t(x)$
so that $\sum_{t=0}^{\tau-1} \alpha_t \ell_t(x_{\tau-1}) 
\leq  \sum_{t=0}^{\tau-1} \alpha_t \ell_t(x_{\tau}) - \frac{ \tilde{A}_{\tau-1} \mu}{2} \| x_{\tau-1} - x_{\tau} \|^{2}$ as $\sum_{t=0}^{\tau-1} \alpha_t \ell_t(x)$ is at least $\frac{ \tilde{A}_{\tau-1} \mu}{2}$-strongly convex. We have completed the proof of (\ref{key:stc0}).
By (\ref{key:stc0}), we have
\begin{equation} \label{reg_x_acc}
\begin{aligned} 
& \textstyle
\regret{x} := \sum_{t=1}^T \alpha_t \ell_t(x_t) - \alpha_t \ell_t( x^*)
\leq \alpha_0 \mu \phi(x^*) - \alpha_0 \mu \phi(x_0) - \sum_{t=1}^T \frac{\mu \tilde{A}_{t-1}}{2} \|x_{t-1} - x_t \|^2.
\\ & \leq \alpha_0 \mu L_0 \| x_0 - x^*\| - \sum_{t=1}^T \frac{\mu \tilde{A}_{t-1}}{2} \|x_{t-1} - x_t \|^2,
\end{aligned}
\end{equation} 
where we assume that $\phi(\cdot)$ is $L_{0}$-Lipchitz.

\end{proof}

\section{Analysis of Accelerated Proximal Method} \label{app:accProx}

First, we need a stronger result.  

\textbf{Lemma} [Property 1 in \cite{T08}] 
\textit{
For any proper lower semi-continuous convex function $\theta(x)$,
let $x^+ =  \argmin_{x \in \K} \theta(x) + V_{c}(x)$. 
Then, it satisfies that for any $x^* \in \K$,
\begin{equation} 
\textstyle \theta(x^+) - \theta(x^*) \leq V_{c}(x^*) - V_{x^+}(x^*) - V_{c}(x^+).
\end{equation}
}

\begin{proof}
The statement and its proof has also appeared in
\cite{CT93} and \cite{LLM11}.
For completeness, we replicate the proof here.
Recall that the Bregman divergence with respect to the distance generating function $\phi(\cdot)$ at 
a point $c$ is: $V_{c}(x):= \phi(x) - \langle \nabla \phi(c), x - c  \rangle  - \phi(c).$ 

Denote $F(x):= \theta(x) + V_{c}(x)$.
Since $x^+$ is the optimal point of 
$\argmin_{x \in K} F(x)\textstyle,$
by optimality,
\begin{equation} \label{m1}
\textstyle \langle x^* - x^+ ,  \nabla F(x^+) \rangle =  
\langle x^* - x^+ , \partial \theta(x^+) + \nabla \phi (x^+) - \nabla \phi(c) \rangle  \geq 0,
\end{equation}
for any $x^{*} \in K$.

Now using the definition of subgradient, we also have
\begin{equation} \label{m2}
\textstyle \theta(x^*) \geq \theta(x^+) + \langle \partial \theta(x^+) , x^* - x^+ \rangle.
\end{equation}
By combining (\ref{m1}) and (\ref{m2}), we have
\begin{equation} 
\begin{aligned}
 \textstyle   \theta(x^*) & \textstyle \geq \theta(x^+) + \langle \partial \theta(x^+) , x^* - x^+ \rangle.
\\ &  \textstyle \geq \theta(x^+) + \langle x^* - x^+ , \nabla \phi (c) - \nabla \phi(x^+) \rangle.
\\  & \textstyle = \theta(x^+) - \{ \phi(x^*) - \langle \nabla \phi(c), x^* - c  \rangle  - \phi(c) \}
+ \{ \phi(x^*) - \langle \nabla \phi(x^+), x^* - x^+  \rangle  - \phi(x^+) \}
\\  & \textstyle + \{ \phi(x^+) - \langle \nabla \phi(c), x^+ - c  \rangle  - \phi(c) \}
\\  & \textstyle  = \theta(x^+) - V_c(x^*) + V_{x^+}(x^*) +  V_c(x^+)
\end{aligned}
\end{equation}
\end{proof}

Recall \MD’s update
$x_{t} = \argmin_{x}   \gamma_t ( \alpha_t h_t(x) ) + V_{x_{t-1}}(x)$,
where $h_t(x) = \langle x,  y_t \rangle + \psi(x)$.
Using the lemma with $\theta(x) = \gamma_t ( \alpha_t h_t(x) )$, $x^+= x_t$ and $c=x_{t-1}$  we have that
\begin{equation} \label{ttb1}
\textstyle \gamma_t ( \alpha_t h_t(x_t) ) - \gamma_t ( \alpha_t h_t(x^*) ) =  \theta(x_t) - \theta(x^*) \leq V_{x_{t-1}}(x^*) - V_{x_t}(x^*) - V_{x_{t-1}}(x_t).
\end{equation}
Therefore, we have that
\begin{equation}
\begin{aligned}
& \textstyle  \regret{x} := \sum_{t=1}^T  \alpha_t  h_t(x_t) - \min_{x \in \XX} \sum_{t=1}^T  \alpha_t  h_t(x)
\\& \textstyle  \overset{(\ref{ttb1})}{\leq}  \sum_{t=1}^{T} \frac{1}{\gamma_t} \big( V_{x_{t-1}}(x^*) - V_{x_t}(x^*) - V_{x_{t-1}}(x_t) \big) 
\\ & \textstyle
=  \frac{1}{\gamma_1} V_{x_{0}}(x^*) - \frac{1}{\gamma_T} v_{x_T}(x^*)  + \sum_{t=1}^{T-1} ( \frac{1}{\gamma_{t+1}} - \frac{1}{\gamma_t} ) V_{x_t}(x^*) - \frac{1}{\gamma_t} V_{x_{t-1}}(x_t) 
\\ & \textstyle
\overset{(a)}{ \leq} \frac{1}{\gamma_1} D + \sum_{t=1}^{T-1} ( \frac{1}{\gamma_{t+1}} - \frac{1}{\gamma_t} ) D - \frac{1}{\gamma_t} V_{x_{t-1}}(x_t) 
= \frac{D}{\gamma_{T}} -  \sum_{t=1}^{T} \frac{1}{\gamma_t} V_{x_{t-1}}(x_t) 
\\ & \textstyle \overset{(b)}{ \leq} \frac{D}{\gamma_{T}} - \sum_{t=1}^{T} \frac{1}{2 \gamma_t} \| x_{t-1} - x_t \|^2,
\end{aligned}
\end{equation}
where $(a)$ holds since the sequence $\{\gamma_t\}$ is non-increasing and $D$ upper bounds the divergence terms, and $(b)$ follows from the strong convexity of $\phi$, which grants $V_{x_{t-1}}(x_t) \geq \frac 1 2 \| x_t - x_{t-1}\|^2$. Now we see that following the same lines as the proof in Section~\ref{analysis:meta}.
We get that $\bar{x}_{T}$ is an $O(\frac{1}{T^2})$ approximate optimal solution.

\section{Accelerated \FW} \label{app:accFW}

\begin{algorithm}[h] 
  \caption{A new FW algorithm [\cite{ALLW18}]}
  \label{alg:FW}
  \begin{algorithmic}[1]
    \STATE In the weighted loss setting of Algorithm~\ref{alg:game}:
    \FOR{$t= 1, 2, \dots, T$}
    \STATE \quad $y$-player uses \textsc{OptimisitcFTL} as $\alg^x$: $y_t = \nabla f( \xof_{t})$.
    \STATE \quad $x$-player uses \BTRL  with $R(X):= \frac{1}{2} \g(x)^{2}$ as $\alg^x$:
    \STATE \qquad Set $(\hat{x}_t, \rho_t) = \underset{x \in \K, \rho \in[0,1] }{\argmin} \sum_{s=1}^t \rho \langle  x, \alpha_s y_s \rangle  +  \frac{1}{\eta} \rho^2$ and play $x_t = \rho_t \hat{x}_t$.   
    \ENDFOR
  \end{algorithmic}
\end{algorithm}

\cite{ALLW18} proposed a \FW like algorithm that not only requires a linear oracle but also enjoys $O(1/T^2)$ rate on all the known examples 
of strongly convex constraint sets that contain the origin,
like $l_p$ ball and Schatten $p$ ball with $p \in (1,2]$.
Their analysis requires the assumption that the underlying function is also strongly-convex to 
get the fast rate.
To describe their algorithm, denote $\K$ be any closed convex set that contains the origin.
Define ``gauge function'' of $\K$ as $\g(x) := \inf \{ c \geq 0: \frac{x}{c} \in \K \}$.
Notice that, for a closed convex $\K$ that contains the origin, 
$\K = \{ x \in \reals^d: \g(x)\leq 1 \}$. Furthermore, the boundary points on $\K$ satisfy $\g(x)=1$.

\cite{ALLW18} showed that the squared of a gauge function is strongly convex on the underlying $\K$
for all the known examples of strongly convex sets that contain the origin.
Algorithm~\ref{alg:FW} is the algorithm.
Clearly, Algorithm~\ref{alg:FW} is an instance of the meta-algorithm.
We want to emphasize again that our analysis does not need the function $f(\cdot)$ to be strongly convex 
to show $O(1/T^2)$ rate. We’ve improved their analysis.

\clearpage

\section{Proof of Theorem~\ref{thm:convergence}} \label{app:thm:convergence}

For completeness, we replicate the proof by \cite{ALLW18} here.

\textbf{Theorem~\ref{thm:convergence}} 
\textit{
  Assume a $T$-length sequence $\balpha$ are given. Suppose in Algorithm~\ref{alg:game} the online learning algorithms $\alg^x$ and $\alg^y$ have the $\balpha$-weighted average regret $\avgregret{x}$ and $\avgregret{y}$ respectively. Then the output  $(\bar{x}_{T},\bar{y}_{T})$ is an $\epsilon$-equilibrium for $g(\cdot, \cdot)$, with
$    \epsilon = \avgregret{x} + \avgregret{y}.$
} 

\begin{proof}
Suppose that the loss function of the $x$-player in round $t$ is $\alpha_t h_t(\cdot) : \XX \to \reals$, where $h_t(\cdot) := g(\cdot, y_t)$. The $y$-player, on the other hand, observes her own sequence of loss functions $\alpha_t  \ell_t(\cdot) : \YY \to \reals$, where $\ell_t(\cdot) := -  g(x_t, \cdot)$.

\begin{eqnarray}
\frac{1}{\sum_{s=1}^T \alpha_s}   \sum_{t=1}^T \alpha_t g(x_t, y_t) 
  & = & \frac{1}{\sum_{s=1}^T \alpha_s} \sum_{t=1}^T  - \alpha_t  \ell_t(y_t)  \notag \\
  \text{} \; & = & 
    - \frac{1}{\sum_{s=1}^T \alpha_s} \inf_{y \in \YY} \left\{ \sum_{t=1}^T  \alpha_t  \ell_t(y) \right\} - \frac{ \regret{y} }{  \sum_{s=1}^T \alpha_s } \notag \\
  \; & = &
    \sup_{y \in \YY} \left\{ \frac{1}{\sum_{s=1}^T \alpha_s} \sum_{t=1}^T  \alpha_t g( x_t , y ) \right\} - \avgregret{y}  \notag \\
  \text{(Jensen)} \;  & \geq & 
    \sup_{y \in \YY} g\left({\textstyle \frac{1}{\sum_{s=1}^T \alpha_s} \sum_{t=1}^T  \alpha_t x_t }, y \right)  - \avgregret{y} 
     \label{eq:ylowbound}  \\
  \text{} \;  & = & 
    \sup_{y \in \YY} g\left({\textstyle \xav_T }, y \right)  - \avgregret{y} 
     \label{eq:ylowbound}  \\
  & \geq & \inf_{x \in \XX} \sup_{y \in \YY} g\left( x , y \right) - \avgregret{y}  \notag
\end{eqnarray}

Let us now apply the same argument on the right hand side, where we use the $x$-player's regret guarantee.
\begin{eqnarray}
\frac{1}{\sum_{s=1}^T \alpha_s}  \sum_{t=1}^T  \alpha_t g(x_t, y_t) & = & \frac{1}{\sum_{s=1}^T \alpha_s} \sum_{t=1}^T  \alpha_t h_t(x_t) \notag \\
  & = & \left\{ 
    \sum_{t=1}^T \frac{1}{\sum_{s=1}^T \alpha_s} \alpha_t h_t(x) \right \} + \frac{ \regret{x} }{  \sum_{s=1}^T \alpha_s }  \notag \\
  & = &  \left\{  \sum_{t=1}^T \frac{1}{\sum_{s=1}^T \alpha_s} \alpha_t g(x^*, y_t) \right \} + \avgregret{x} \notag \\
  & \leq &   
    g\left(x^*,{ \textstyle \sum_{t=1}^T \frac{1}{\sum_{s=1}^T \alpha_s} \alpha_t y_t}\right) + \avgregret{x} 
    \label{eq:xupbound} \\
  & = & 
    g\left(x^*,{ \textstyle \yav_T}\right) + \avgregret{x} 
    \label{eq:xupbound} \\
  & \leq & \sup_{y \in \YY}  g(x^*,y) + \avgregret{x}  \notag
\end{eqnarray}
Note that $\sup_{y \in \YY}  g(x^*,y) = f(x^*)$ be the definition of the game $g(\cdot,\cdot)$ and by Fenchel conjugacy, hence we can conclude that $\sup_{y \in \YY} g(x^*,y) = \inf_{x \in \XX} \sup_{y \in \YY} g(x,y) = V^* = \sup_{y \in \YY} \inf_{x \in \XX}  g(x,y)$.
Combining (\ref{eq:ylowbound}) and (\ref{eq:xupbound}), we see that:
\begin{align*}
    \sup_{y \in \YY} g\left({\textstyle \xav_T}, y \right)  - \avgregret{y}  \le \inf_{x \in \XX}  
g\left(x,{ \textstyle \yav_T }\right) + \avgregret{x} 
\end{align*}
which implies that $(\bar{x}_{T}, \bar{y}_{T})$ is an $\epsilon =\avgregret{x} + \avgregret{y} $ equilibrium.
\end{proof}

\end{document}